\newcommand{\R}{\mathbb{R}}
\newcommand{\opn}[1]{\|#1\|_{p\rightarrow q}}
\newcommand{\X}{\mathcal{X}}
\newcommand{\Y}{\mathcal{Y}}
\newtheorem{theorem}{Theorem}
\newtheorem{corollary}{Corollary}
\newcommand{\class}{C} 
\newcommand{\scoreclassset}{\mathcal{F}}
\begin{document}

\twocolumn[
\icmltitle{Input Hessian Regularization of Neural Networks}



\icmlsetsymbol{equal}{*}

\begin{icmlauthorlist}
\icmlauthor{Waleed Mustafa}{tuk}
\icmlauthor{Robert A. Vandermeulen}{tub}
\icmlauthor{Marius Kloft}{tuk}

\end{icmlauthorlist}

\icmlaffiliation{tuk}{Computer Science Department, TU Kaiserslautern}
\icmlaffiliation{tub}{Machine Learning Group, TU Berlin}

\icmlcorrespondingauthor{Waleed Mustafa}{mustafa@cs.uni-kl.de }

\icmlkeywords{Machine Learning, ICML, Second Order Regularization, Adversarial Example Defenses}

\vskip 0.3in
]



\printAffiliationsAndNotice{}  

\begin{abstract}
Regularizing the input gradient has shown to be effective in promoting the robustness of neural networks. The regularization of the input's Hessian is therefore a natural next step. A key challenge here is the computational complexity. Computing the Hessian of inputs is computationally infeasible. 
In this paper we propose an efficient algorithm to train deep neural networks with Hessian operator-norm regularization. 
We analyze the approach theoretically and prove that the Hessian operator norm relates to the ability of a neural network to withstand an adversarial attack. We give a preliminary experimental evaluation on the MNIST and FMNIST datasets, which demonstrates that the new regularizer can, indeed, be feasible and, furthermore, that it increases the robustness of neural networks over input gradient regularization. 
\end{abstract}

\section{Introduction}
Allowing for the automation of many previously impossible tasks, deep learning has brought about tremendous advances in machine learning. However, the lack of robustness of deep neural networks (DNNs) has caused some concern regarding their deployment, particularly in security or safety-critical applications. One such concern is the existence of adversarial examples \cite{Szegedy2013IntriguingPO}---samples that have been imperceptibly modified to trick a deep learning system to behave incorrectly. Many methods have been proposed to generate such examples \cite{Carlini2017TowardsET,Kurakin2016AdversarialML,Athalye2018ObfuscatedGG,Moosavi-Dezfooli2015} or defend neural networks against them \cite{Hein2017FormalGO,singla2020curvaturebased,moosavi2019robustness,Madry2017TowardsDL}.


A line of research suggests that the regularization of input gradients helps promote robustness \cite{finlay2019scaleable,ross2018improving,Hein2017FormalGO}. To better theoretically understand the robustness of neural networks against adversarial attacks, \citet{Hein2017FormalGO} introduced a theoretical bound on the robustness of a network in terms of the norm of its gradient around a test sample. A large (input) gradient at a sample implies that it is possible to change the network output by only slightly manipulating the input. Using this theoretical insight, they propose a defense in the form of a regularizer that promotes the model's gradient norms to be small. Their regularizer assumes that the network is locally well-approximated by a linear function. This approximation can be poor when a function is highly non-linear, as it is the case with neural networks. Second-order approximations can be more suitable here and improve robustness. This was empirically supported in \citet{fawzi2017classification}, where the authors showed that adversarial directions can highly correlate with high principal curvatures (i.e., Hessian eigenvectors that correspond to high eigenvalues). 

In this paper we propose to use second-order derivatives to better approximate the behavior of the classifier around a sample. We prove guarantees in the form of lower bounds on the robustness of neural networks in terms of second-order derivatives. 
Motivated by our theoretical results, we propose \textit{Cross-H\"older Regularization}, a second order regularizer that penalizes input Hessians with large eigenvalues. While a naive approach to computing the gradient of the largest absolute eigenvalue of a Hessian is computationally expensive, we introduce an efficient procedure to compute it. We present a preliminary empirical evaluation on the MNIST and FMNIST datasets, which showcases that our method can outperform gradient-based regularization on robustness to adversarial samples.

The paper is organized as follows. In the remainder of this section we introduce the notation. In Section \ref{sec:theorem}  we introduce a new theory that relates adversarial robustness to second-order differential properties of neural networks. In Section \ref{sec:regularizer} we introduce a new regularizer as well as a method for computing the gradient of the operator norm of a Hessian. Finally in Section \ref{sec:exp} we present an empirical analysis. Note that related work is discussed in Appendix~\ref{sec:background}.

\paragraph{Notation and Problem Setup
}Let $\X \subset \R^d$ be a feature space and $\Y = \{1, \ldots, K\}$ a label space. Denote the training set by $\{(x_i,y_i)\}_{i=1}^N$. We are interested in classifiers that are derived from a scoring function ${\bf f} : \X \rightarrow \R^K$ from which we classify via $\class_{\bf f}(x) := \arg\max_i {\bf f}_i(x)$. Training is therefore achieved by selecting a scoring function from a  family of functions $\scoreclassset$ indexed by real-valued parameters $\theta \in \R^D$, with the goal to minimize the objective $\Sigma_{i=1}^N \ell({\bf f}_\theta(x_i),y_i)$, where $\ell: \R^K \times \Y \rightarrow \R^+$ is a loss function. For the rest of the paper we use ${\bf f}(x)$ instead of ${\bf f}_\theta(x)$ to avoid notational clutter. For a matrix   $H \in \R^{d\times d}$, the operator norm $\opn{H}$ is defined as:  $\opn{H} := \sup \{\|Hx\|_q: \|x\|_p = 1\}.$ 
Our goal is to train classifiers that are robust under adversarial attacks. That is, if a classifier $\class_{\bf f}$ predicts a class for an input $x$, it should stay consistent for similar examples $x'$. Formally, for a \textit{robustness level} $\epsilon > 0$, we require $\class_{\bf f}(x) = \class_{\bf f}(x+\delta)$, for $\|\delta\|_p \le \epsilon$.

\section{Robustness and Second Order Derivatives}
\label{sec:theorem}
In this section, we prove a theoretical result that relates the robustness of a classifier to the input Hessian operator norm. Our first result is akin to Theorem 1 in \citet{Hein2017FormalGO}, where the authors proved that robustness is controlled by the norm input gradient of a margin loss in a local $\ell_p$ ball around it. We extend their theory to control the robustness of classifier by the Hessian operator norm in an $\ell_p$ ball around an input example. Therefore, we show an explicit connection between the robustness and curvature. This connection allows us to derive our second order regularizer. To avoid notational clutter, we present our result for the binary classifier case. First we review the result by \citet{Hein2017FormalGO}.  
\begin{theorem}\label{thm:hein}

     Let $x \in \R^d$ and ${\bf f}: \R^d \rightarrow \R^2$ be a binary-classifier scoring function with continuously differentiable components. Further let ${\bf f}_0(x) > {\bf f}_1(x)$ and define $f = {\bf f}_0 - {\bf f}_1$. For $\delta \in \mathbb{R}^d$ and $p,q$ such that $\frac{1}{p} + \frac{1}{q} = 1$, if ${\bf f}_1(x+\delta) \ge {\bf f}_0(x+\delta)$, then the following bound holds
    \begin{align}
        \label{eq:hein}
        \|\delta\|_p \ge \max_{R>0}\min\bigg\{R,\frac{f(x)}{\max_{\|\gamma\|_p\le R}\|\nabla f(x+\gamma)\|_q }\bigg\}.
    \end{align}
\end{theorem}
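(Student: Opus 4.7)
My plan is to fix an arbitrary $R > 0$ and do a case split on whether $\|\delta\|_p \ge R$ or $\|\delta\|_p < R$. In the first case, the inner $\min\{R,\cdot\}$ is automatically at most $\|\delta\|_p$, so nothing needs to be done. The interesting case is when $\|\delta\|_p < R$: then the entire line segment $\{x+t\delta : t \in [0,1]\}$ lies inside the closed $\ell_p$ ball of radius $R$ centered at $x$, which lets me control $\|\nabla f\|_q$ uniformly along the segment by the quantity $M(R) := \max_{\|\gamma\|_p \le R}\|\nabla f(x+\gamma)\|_q$ (the max exists because $\nabla f$ is continuous and the ball is compact).

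In that second case, I would apply the fundamental theorem of calculus along the segment to obtain
\begin{equation*}
f(x+\delta) - f(x) \;=\; \int_0^1 \langle \nabla f(x+t\delta),\,\delta\rangle\, dt.
\end{equation*}
By the assumption ${\bf f}_1(x+\delta) \ge {\bf f}_0(x+\delta)$ we have $f(x+\delta) \le 0$, so the left-hand side is at most $-f(x)$. Applying Hölder's inequality pointwise in $t$ with conjugate exponents $p,q$ gives $|\langle \nabla f(x+t\delta),\delta\rangle| \le \|\nabla f(x+t\delta)\|_q \|\delta\|_p \le M(R)\,\|\delta\|_p$. Integrating this bound over $t \in [0,1]$ and combining with the previous display yields
\begin{equation*}
f(x) \;\le\; M(R)\,\|\delta\|_p,
\end{equation*}
which rearranges to $\|\delta\|_p \ge f(x)/M(R)$.

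Combining the two cases, for every $R > 0$ I would conclude
\begin{equation*}
\|\delta\|_p \;\ge\; \min\Bigl\{R,\;\tfrac{f(x)}{M(R)}\Bigr\}.
\end{equation*}
Since $R$ was arbitrary, taking the supremum over $R > 0$ (and noting that the supremum is attained when the two quantities inside the $\min$ balance, because $M(R)$ is monotone nondecreasing in $R$) yields the claimed bound.

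The main obstacle is really only a technical one: confirming that $M(R)$ is a genuine maximum (continuity of $\nabla f$ on a compact set) and that the max over $R$ is well-defined. The rest is a one-line fundamental-theorem-of-calculus argument plus Hölder's inequality, so the substantive content lies entirely in choosing the right line-segment parameterization and realizing that the uniform bound $M(R)$ is exactly what allows the pointwise Hölder estimate to be integrated into a single norm inequality.
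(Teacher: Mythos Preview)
Your proof is correct. Note, however, that the paper does not actually supply its own proof of this statement---Theorem~\ref{thm:hein} is quoted from \citet{Hein2017FormalGO} as background. That said, your argument is exactly the first-order specialization of the proof the paper gives for Theorem~\ref{thm:tight} in Appendix~\ref{sec:proofs}: fix $R>0$, assume $\|\delta\|_p \le R$ so the segment $\{x+t\delta\}$ stays in the ball, expand $f(x+\delta)-f(x)$ via the fundamental theorem of calculus (the paper uses Taylor with integral remainder to pick up the Hessian term as well), apply H\"older pointwise and bound uniformly by the maximum over the ball, rearrange, then take $\min$ with $R$ and finally $\max$ over $R$. The only difference is that the paper phrases the case split implicitly (``Since we restricted $\|\delta\|_p$ to be at most $R$, \ldots\ we restrict the lower bound to be at most $R$'') rather than as an explicit dichotomy, but the logic is identical.
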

The above bound implies that, if $f$ is not steep around $x$ (i.e. $\nabla f(\delta+x)$ is small), then one must move a large distance away from $x$ to find an adversarial example (i.e. $\delta$ is large).
This bound was shown to be tight in \citet{Hein2017FormalGO} when considering linear classifiers.

Note here that the dependence on curvature is only implicit. Indeed, if $f$ is locally highly curved, then $\max_{\|\gamma\|_p\le R}\|\nabla f(x+\gamma)\|_q$ is large. In order to design a curvature regularizer, we introduce the following result which makes the dependence on the curvature explicit. 
\begin{theorem}
    \label{thm:tight}
    Let $x \in \R^d$ and ${\bf f}: \R^d \rightarrow \R^2$ be a binary-classifier scoring function with continuously twice differentiable components. Further let ${\bf f}_0(x) > {\bf f}_1(x)$ and define $f = {\bf f}_0 - {\bf f}_1$. $H_f:\R^d\rightarrow \R^{d\times d}$  denotes the Hessian of $f$. For $\delta \in \mathbb{R}^d$ and $p,q$ such that $\frac{1}{p} + \frac{1}{q} = 1$, if ${\bf f}_1(x+\delta) \ge {\bf f}_0(x+\delta)$, then the following bound holds:
    \begin{align}
        \|\delta\|_p \ge \max_{R>0}\min\left \{R,\frac{f(x)}{\|\nabla f(x)\|_q+ \frac{R}{2}K_R(f,x)}\right \},
    \end{align}
    where $K_R(f,x) = \max_{\|\gamma\|_p\le R}K(f,x+\gamma)$, for $K(f,x):= \opn{H_f(x)}$.
\end{theorem}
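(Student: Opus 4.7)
The plan is to mirror the argument of Theorem \ref{thm:hein}, but replace the first-order Taylor expansion with a second-order one so that the curvature enters explicitly instead of being absorbed into a gradient along the path. Concretely, by Taylor's theorem with Lagrange remainder applied to the twice-differentiable function $f$, for every $\delta$ there exists $\xi$ on the segment $[x,x+\delta]$ such that
\begin{equation*}
f(x+\delta) = f(x) + \nabla f(x)^\top \delta + \tfrac{1}{2}\,\delta^\top H_f(\xi)\,\delta.
\end{equation*}
The hypotheses give $f(x) > 0$ and $f(x+\delta) \le 0$, so rearranging yields
\begin{equation*}
f(x) \;\le\; -\nabla f(x)^\top \delta \;-\; \tfrac{1}{2}\,\delta^\top H_f(\xi)\,\delta \;\le\; |\nabla f(x)^\top \delta| + \tfrac{1}{2}|\delta^\top H_f(\xi)\,\delta|.
\end{equation*}

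Next I would bound each term using H\"older's inequality with conjugate exponents $p,q$. The linear term is standard: $|\nabla f(x)^\top \delta| \le \|\nabla f(x)\|_q\,\|\delta\|_p$. For the quadratic form I would apply H\"older once and then invoke the definition of the $p\to q$ operator norm:
\begin{equation*}
|\delta^\top H_f(\xi)\,\delta| \;\le\; \|\delta\|_p\,\|H_f(\xi)\,\delta\|_q \;\le\; \opn{H_f(\xi)}\,\|\delta\|_p^{\,2}.
\end{equation*}
This is the step I expect to be the main conceptual obstacle, since one must thread H\"older's inequality compatibly with the definition of the cross-norm $\opn{\cdot}$; the naming \emph{Cross-H\"older Regularization} in the introduction suggests the authors view this precisely as the crux.

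Combining these bounds gives $f(x) \le \|\nabla f(x)\|_q\,\|\delta\|_p + \tfrac{1}{2}\opn{H_f(\xi)}\,\|\delta\|_p^{\,2}$. Now fix any $R>0$ and split on the size of $\|\delta\|_p$. If $\|\delta\|_p \ge R$, the claimed minimum is trivially bounded by $R$. If $\|\delta\|_p < R$, then $\xi = x + t\delta$ for some $t\in[0,1]$ lies in the closed $\ell_p$-ball of radius $R$ about $x$, so $\opn{H_f(\xi)} \le K_R(f,x)$, and using $\|\delta\|_p < R$ in one factor of $\|\delta\|_p^{\,2}$ produces
\begin{equation*}
f(x) \;\le\; \|\delta\|_p\Bigl(\|\nabla f(x)\|_q + \tfrac{R}{2}K_R(f,x)\Bigr),
\end{equation*}
which rearranges to the second branch of the minimum. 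Since the resulting lower bound on $\|\delta\|_p$ holds for every $R>0$, taking the supremum over $R$ completes the proof. The continuity of $H_f$ ensures $K_R(f,x)$ is finite on each bounded ball, so the expression is well-defined.
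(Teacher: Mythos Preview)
Your proof is correct and follows essentially the same route as the paper's: second-order Taylor expansion, H\"older on the linear term, H\"older plus the operator-norm definition on the quadratic term, then the case split on $\|\delta\|_p$ versus $R$ and supremum over $R$. The only cosmetic difference is that the paper uses the integral form of the remainder, $\int_0^1 \delta^\top H_f(x+t\delta)\delta\,(1-t)\,dt$, so the factor $\tfrac{1}{2}$ emerges from $\int_0^1(1-t)\,dt$, whereas you obtain it directly from the Lagrange form; both lead to the identical inequality.
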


The proof of the above theorem is deferred to Appendix \ref{sec:proofs}. The theorem suggests that, to control the robustness of a classifier, we must control the local curvature around input points. Again, this requirement is on $f$, not the individual components ${\bf f}_i$. Therefore, the classifier components can have a large local curvature as far as the relative curvature is low, which is a less strict requirement. The dependence on the curvature is explicit in this result through the input Hessian operator norm of the margin function $f$ locally around the input point. Since this bound is equivalent to the bound \eqref{eq:hein} in the case of linear scoring functions, it borrows its tightness. 


In the next corollary, we extend our result to a multi-class classifier. It is straightforward to do so. If a classifier predicts a class $t$, then define $f^{(j)} = {\bf f}_t - {\bf f}_j$. We then apply theorem \ref{thm:tight} to $f^{(j)}$, for $j \ne t$, and take the minimum of resulting lower bounds.

\begin{corollary}
    Let $x \in \R^d$ and ${\bf f}: \R^d \rightarrow \R^K$ be a multi-class classifier scoring function with continuously twice differentiable components. Further let $t = \arg\max_i {\bf f}_i(x)$ and define $f^{(j)} = {\bf f}_t - {\bf f}_j$ for $j \ne t$. For $\delta \in \mathbb{R}^d$ and $p,q$ such that $\frac{1}{p} + \frac{1}{q} = 1$, if $\max_{j \ne t}{\bf f}_j(x+\delta) > {\bf f}_t(x+\delta)$, then $\|\delta\|_p$ is bounded below by:
    \begin{align}\label{eq:reg2_bound2}
        \max_{R>0}\min\left \{R,\min_{j \ne t}\frac{f^{(j)}(x)}{\|\nabla f^{(j)}(x)\|_q+ \frac{R}{2}K_R(f^{(j)},x)}\right \}.
    \end{align}
\end{corollary}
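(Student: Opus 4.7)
The plan is to reduce the multi-class bound to a single invocation of Theorem~\ref{thm:tight} on a well-chosen pairwise margin, and then eliminate the choice by taking a minimum over the competing classes.

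First I would exploit the adversarial hypothesis $\max_{j \ne t}{\bf f}_j(x+\delta) > {\bf f}_t(x+\delta)$, which yields some index $j^\star \ne t$ with ${\bf f}_{j^\star}(x+\delta) \ge {\bf f}_t(x+\delta)$. Combined with $t = \arg\max_i {\bf f}_i(x)$, giving $f^{(j^\star)}(x) > 0$, this is precisely the binary hypothesis of Theorem~\ref{thm:tight} applied to the pair $({\bf f}_t, {\bf f}_{j^\star})$ with margin $f = f^{(j^\star)}$. Direct application of the theorem then gives
\begin{equation*}
\|\delta\|_p \ge \max_{R>0}\min\left\{R,\; \frac{f^{(j^\star)}(x)}{\|\nabla f^{(j^\star)}(x)\|_q + \tfrac{R}{2}K_R(f^{(j^\star)},x)}\right\}.
\end{equation*}

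Next I would remove the dependence on the unknown $j^\star$. Writing $h(j,R)$ for the fraction above, since $j^\star \in \{j : j \ne t\}$ we have $\min_{j \ne t}\min\{R,h(j,R)\} \le \min\{R,h(j^\star,R)\}$ for every fixed $R>0$; taking $\max_{R>0}$ on both sides preserves the inequality, so
\begin{equation*}
\|\delta\|_p \ge \max_{R>0}\min_{j \ne t}\min\left\{R,\; h(j,R)\right\}.
\end{equation*}
A final rewrite using the identity $\min_{j\ne t}\min\{R,h(j,R)\} = \min\{R,\min_{j\ne t} h(j,R)\}$, valid because $R$ does not depend on $j$, reproduces the form \eqref{eq:reg2_bound2} stated in the corollary.

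The only delicate point is the order of the quantifiers: the argument in fact produces the sharper bound with $\min_{j\ne t}\max_{R>0}$ outside, and one must weaken it to $\max_{R>0}\min_{j\ne t}$ via the max-min inequality to obtain the cleaner closed form advertised. No further curvature analysis or Taylor-expansion work is needed---every nontrivial ingredient is already packaged inside Theorem~\ref{thm:tight}.
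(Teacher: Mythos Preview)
Your proposal is correct and follows the same route the paper sketches just before the corollary: pick the offending class, apply Theorem~\ref{thm:tight} to the corresponding pairwise margin, then minimize over $j\neq t$ to remove the dependence on which class actually overtook $t$. Your write-up is in fact more careful than the paper's one-line sketch, since you make explicit that Theorem~\ref{thm:tight} is only invoked for the single index $j^\star$ where the adversarial hypothesis actually holds, and you spell out the max--min manipulation needed to land on the exact form~\eqref{eq:reg2_bound2}.
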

    
\section{Cross H\"older Regularizer} \label{sec:regularizer}
In this section, we introduce a new technique for adversarial defending. The goal is to define a regularizer that improves the robustness of multi-class classifiers. This regularizer is based on~\eqref{eq:reg2_bound2} introduced in the last section. We start by deriving the regularization term to maximize the lower bound. Directly doing so is computationally impractical. Therefore, we reformulate it, to make it amenable for training.  

Our goal is to train a classifier so that for an adversarial example, $x+\delta$, the norm of $\delta$ must be large, i.e. adversarial examples must have a large amount of distortion. To that end, we train our classifier in such a way that the right-hand side of \eqref{eq:reg2_bound2} is large at test samples. It suffices to maximize the term for each $j$ independently, that is,
\begin{align}
    \label{eq:obj}
    \max_\theta\frac{f^{(j)}(x)}{\left\|\nabla f^{(j)}(x) \right\|_q + \frac{R}{2} K_R(f,x)}.
\end{align}
It was argued in \citet{Hein2017FormalGO} that the cross entropy loss will naturally drive $f^{(j)}(x)$ to be large.
Therefore we turn our attention to the denominator of \eqref{eq:obj}. Minimizing the gradient term is standard \cite{drucker1992improving,Hein2017FormalGO,ross2018improving}. The Hessian term is, however, hard to minimize. The first issue is the maximization operator $\max_{\|\gamma\|_p \le R}$, which makes evaluation of this term computationally expensive. To alleviate this issue, we assume that $\opn{H_{f^{(j)}}(x)} \approx \max_{\|\gamma\|_p \le R}\opn{H_{f^{(j)}}(x+\gamma)}$. This can be achieved if we assume that the function $f$ is locally well-approximated by a quadratic function. 

Our final objective is thus to make $\left\|\nabla f^{(j)}(x)\right\|_q + \frac{R}{2} \opn{H_{f^{(j)}}(x)}$ small for all $j \ne y$, where we now treat $R$ as a tuning parameter. Relaxing this regularizer a bit and taking a summation of all possible classes we arrive at our proposed training objective, where we enforce our network to be robust around training samples. We define our loss $L\left(\{x_i,y_i\}_{i=1}^n\right)$ to be
\begin{align}
\label{eq:loss}
\begin{split}
    &\sum_{i=1}^n \ell(y_i,x_i) + \\
     &\lambda_1 \sum_{i=1}^n\sum_{j=1,j\ne y_i}^K \left(\|\nabla f^{(j)}(x_i)\|_q +\lambda_2 K(f^{(j)},x_i)\right),
\end{split}
\end{align}
where $\ell(.,.)$ is the cross-entropy loss, and $\lambda_1$ and $\lambda_2$ are regularization parameters. The last two terms of~\eqref{eq:loss} together form the \emph{Cross-H\"older Regularizer}.
Notice that when $\lambda_2 =  0$ we get the Cross-Lipschitz regularizer.

\section{Training with Hessian Operator norm}
To minimize the new objective \eqref{eq:loss}, we will use stochastic gradient descent. Thus we need to evaluate the gradient of the objective. Obtaining the gradients needed for SGD is usually done through \textit{automatic differentiation} \cite{griewank2008evaluating}, as implemented in popular machine learning toolkits (e.g., Tensorflow \cite{Abadi2016TensorFlowAS}). It is sufficiently efficient to compute the gradient of the loss and gradient norm. Computing the gradient of the operator norm of the Hessian is, however, very computationally inefficient for algorithmic differentiation. Obtaining the Hessian is very costly for large neural network architectures, let alone computing its operator norm and computing the gradient of the combined operations with respect to the model parameters.
 
We now describe how to efficiently calculate the gradient of the Hessian operator norm. In what follows we use $H_\theta$ to denote $H_{f^{(j)}}(x)$. Define a function $g: \R^{d} \times \R^D \rightarrow \R^+$ as 
\begin{equation}
  \label{eq:op}
 g(\theta,v) =  \|H_\theta v\|_q,  
\end{equation}
where $\theta$ is the model parameters and $v \in \{ v\in \R^d: \|v\|_p = 1\}$. Therefore, the operator norm is
\begin{align*}
    \opn{H_\theta} = \max_{ \|v\|_p = 1} g(\theta, v).
\end{align*}
Let $v^*$ be the vector attaining the maximum of \eqref{eq:op}. If $v^*$ is a unique maximizer, then, by Danskin's theorem \cite{danskin2012theory}, we have
\begin{align*}
    \nabla_{\theta} \max_{\|v\|_p=1}g(\theta,v) = \nabla_{\theta} g(\theta,v^*).
\end{align*}
It was also shown in \citet{Madry2017TowardsDL} that, even if $v^*$ is not a unique maximum, the direction $-\nabla_{\theta} g(\theta,v^*)$ is still a descent direction. Therefore it suffices to compute the gradient at the maximum $v^*$ for training.

There still remains the problem of computing both $v^*$ and $\nabla_\theta \|H_\theta v^*\|_q$. One can find $v^*$ by optimizing \eqref{eq:op}. It is, however, again the problem of computing the Hessian matrix $H_\theta$. Fortunately, the Hessian matrix appears above only multiplied by a vector. This allows us to use an efficient algorithm to obtain a computational graph for Hessian vector multiplication \cite{Pearlmutter1994}. With this algorithm one can use algorithmic differentiation to obtain both $\nabla_\theta \|H_\theta v^*\|_q$ and $\nabla_v \|H_\theta v\|_q$\footnote{When $\|H_\theta v\|_q$ is differentiable as a function of $v$ which is the case for $q=2$. } with a computational expense on the same order as backpropagation \cite{griewank2008evaluating,Pearlmutter1994}. In a nutshell, our approach reduces to solving
\begin{align*}
    \min_\theta&\max_{v_{ij}} 
    \sum_{i=1}^n \ell(y_i,x_i)+& \\
    &\lambda_1 \sum_{i=1}^n\sum_{j\ne y_i}^K \left(\|\nabla f^{(j)}(x_i)\|_q +\lambda_2 \| H_{f^{(j)}}(x_i)v_{ij}\|_q \right),
\end{align*}
for $\|v_{ij}\|_p = 1$. Algorithm \ref{alg:1} summarizes our training approach. The inner loop is used to estimate $v^*$ which is then used in the last step to estimate the gradient of the objective. In our experiments, we found setting $T=10$ and $o = 0.1$ in the algorithm suffices to get a reasonable approximation to the Hessian operator norm. It is worth noting that thanks to the hessian vector multiplication algorithm \cite{Pearlmutter1994}, computing the gradient enjoys a linear running time.  


\begin{algorithm}[tb]
\label{alg:1}
 \caption{Hessian Norm Regularization training}
   \label{alg:example}
\begin{algorithmic}
   \STATE {\bfseries Input:} Training data $\{(x_i,y_i)\}_{i = 1}^N$, a model ${\bf f}$, initialized parameters $\theta$, number of epochs $J$, operator norm optimization iterations $T$, hyper parameters $\lambda_1, \lambda_2$, and learning rates $o$ and $l$.
  \FOR { $j = 1$ {\bfseries to} $J$}
   \STATE Sample a mini batch $\{(x_i,y_i)\}_{i=1}^b$ of size $b$.
   \STATE   Initialize $v_{ij}$s i.i.d. from $\mathcal{N}(0,I)$ defined on $\R^d$, for $i\in {1, \ldots, b}$ and $j \in {1, \ldots, K-1}$.
   \FOR{$t=1$ {\bfseries to} $T$}
   \STATE Update $v_{ij}$s for each $i$ and $j$ using:
    \begin{align*}
      v_{ij}  &= v_{ij} + o*\nabla_{v_{ij}} \| H_{f^{(j)}}(x_i)v_{ij}\|_q \\
      v_{ij}  &= \frac{v_{ij}}{\|v_{ij}\|_p} 
     \end{align*}
   \ENDFOR
   \STATE Update parameters $\theta$ by:
      \begin{align*}
         \theta =   \theta &+\sum_{i=1}^b \sum_{j\ne y_i}^{K} l*\nabla_\theta \bigg ( \ell(x_i,y_i)      \\
         & +\lambda_2\|\nabla f^{(j)}(x_i)\|_q + \lambda_1 \lambda_2 \|H_{f^{(j)}}(x_i)v_{ij}\|_q \bigg )
     \end{align*}
   \ENDFOR
\end{algorithmic}
\end{algorithm}
\section{Experiments}
\label{sec:exp}

In this section, we present some first experimental results for evaluating the adversarial robustness of models trained with the Cross-H\"older regularizer. 
We test the robustness on two datasets, namely MNIST and FMNIST. As baseline defenses, we use the gradient based approach of Cross-Lipschitz \cite{Hein2017FormalGO}. We also compare against adversarial training \cite{Madry2017TowardsDL} as it is considered state of the art defense \cite{Athalye2018ObfuscatedGG}. The detailed experimental setup and further results are presented in Appendix \ref{sec:det_exp}.

Evaluating the robustness is a hard problem due to the high non-convexity of the attack optimization problem \cite{Madry2017TowardsDL}. Many defenses claim the robustness against weak attacks while they fail to be robust against harder attacks \cite{Athalye2018ObfuscatedGG,carlini2019evaluating}. Therefore, to evaluate the robustness of models trained with a given defense mechanism, it is important to use the strongest available attacks. We use the recommended PGD attack with multiple iterations and multiple random initializations \cite{carlini2019evaluating}. Since, in adversarial training, an attack is used in the training procedure we use also a different attack in the evaluation phase \cite{carlini2019evaluating}. We use two PGD attacks with 50 iterations and 10 random restarts. The first one uses the cross-entropy loss as objective, as in adversarial training. The second one uses the objective of Carlini and Wagner (CW)  \cite{carlini2019evaluating,Athalye2018ObfuscatedGG}. As a robustness metric, we report on adversarial accuracy, that is, the accuracy after we have applied a given attack. We report the worst-case results of the two attacks, which is the minimum adversarial accuracy of the two attacks.  We report the adversarial accuracy for different robustness levels $\varepsilon$. 
Figure \ref{fig:mnist_res} depicts the worst-case accuracy of the three defenses on the MNIST dataset. The figure shows that all three defenses are quite effective up to robustness level 4.0 after which Cross-H\"older outperforms Cross-Lipschitz.    
 \begin{figure}[h!]
 \centering
  
\label{fig:mnist_worst}
\includegraphics[width=0.4\textwidth]{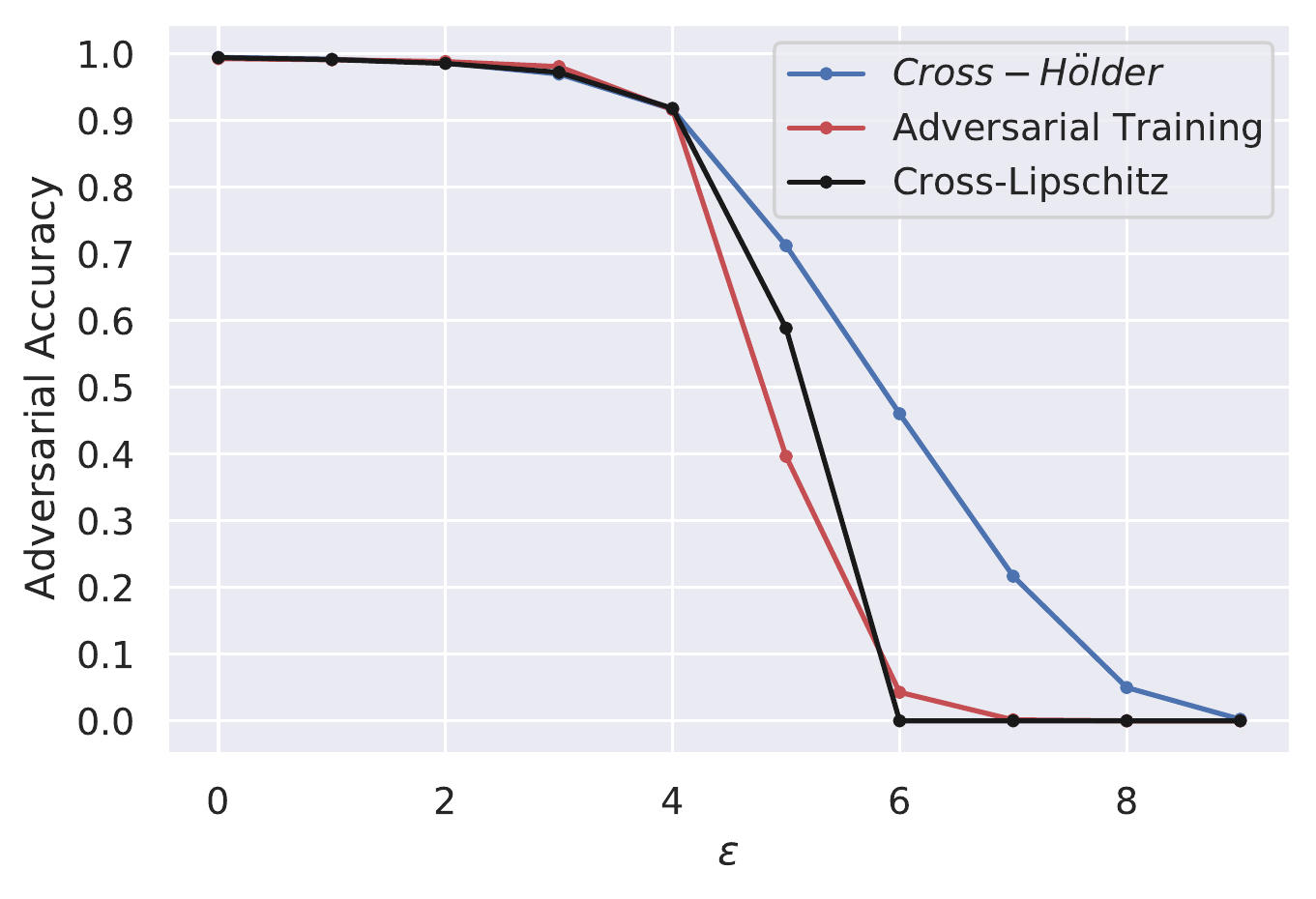}
\caption{Adversarial accuracy versus robustness levels for MNIST dataset.}
\label{fig:mnist_res}
\end{figure}
\begin{figure}[h!]
    \centering
 
    \label{fig:fmnist_mini}
    \includegraphics[width=0.4\textwidth]{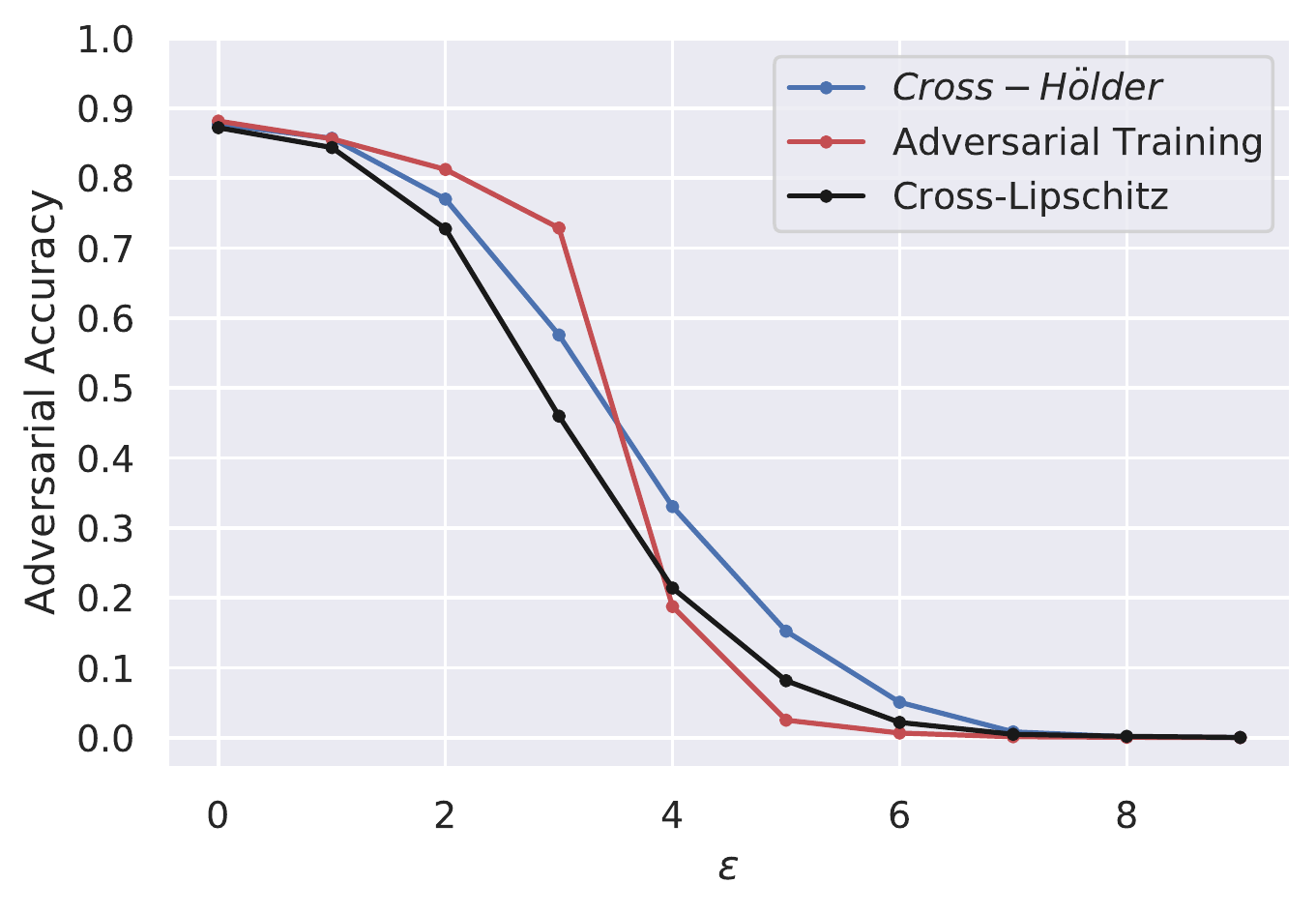}
    \caption{Adversarial accuracy versus robustness levels fashion MNIST dataset.}
    \label{fig:fmnist_res}
\end{figure}
Figure \ref{fig:fmnist_res} shows the worst case adversarial accuracy on the FMNIST dataset. Again Cross-H\"older outperforms Cross-Lipschitz but is inferior to adversarial training up to robustness level 4.0. Experiments on FMNIST showed the effectiveness of Cross-H\"older regularizer over Cross-Lipschitz while being competitive with adversarial training.
\section{Conclusion}
In this paper we present a method for an input gradient regularization of neural networks, which we call \emph{Cross-H\"older regularization}. We have proven both theoretically and empirically that this regularization is useful for increasing the resilience of deep neural networks against adversarial attacks. In future work, we wish to analyze the proposed regularization algorithm on further datasets.
\section*{Acknowledgement}
Marius Kloft acknowledges support by the German Research Foundation (DFG) award KL 2698/2-1 and by the Federal Ministry of Science and Education (BMBF) awards 01IS18051A and 031B0770E. Robert A. Vandermeulen acknowledges support by the Berlin Institute for the Foundations of Learning and Data (BIFOLD) sponsored by the German Federal Ministry of Education and Research (BMBF). The experiments were partly executed on the high performance cluster “Elwetritsch II” at the TU Kaiserslautern (TUK) which is part of the “Alliance for High Performance Computing Rheinland-Pfalz”(AHRP). We kindly acknowledge the support of the RHRK especially when using their DGX-2.
\bibliography{example_paper}

\bibliographystyle{icml2020}

\newpage
\clearpage

\appendix
\section{Related Work}
\label{sec:background}

Since the work of \cite{Szegedy2013IntriguingPO} and \cite{biggio2013evasion}, neural network models have been shown to be vulnerable to small input perturbations. Many elaborate attacks have been introduced since then \cite{Carlini2017TowardsET,brendel2017decision,Moosavi-Dezfooli2015}. As a response, many defenses---training algorithms that harden the attack problem---have also been devised \cite{Madry2017TowardsDL,Papernot2016DistillationAA,Hein2017FormalGO}. In this section, we give a brief overview of the prominent attacks and related defenses.

Attacks can be classified into \textit{black-box} \cite{brendel2017decision} and \textit{white-box} \cite{Carlini2017TowardsET} attacks. In a white-box attack, the attacker has full knowledge of the classifier used (e.g., the neural network architecture and its parameters). On the other hand, black-box attacks can only query a classifier but do not have the full knowledge of the model. Both types of attacks have the same goal: they are trying to solve the problem \cite{Szegedy2013IntriguingPO,Carlini2017TowardsET}:
\begin{equation}
    \label{opt:obj_norm}
        \min_\delta \|\delta\|_p 
        \quad \text{s.t. } \class(x) \neq \class(x+\delta),
\end{equation}
or alternatively \cite{Madry2017TowardsDL,Goodfellow2014ExplainingAH}
\begin{equation}\label{opt:obj_loss}
        \max_\delta \ell(x+\delta,y) \quad
        \text{s.t. } \|\delta\|_p \le \epsilon.  
\end{equation}
The adversarial attack problem is usually not convex due to the constraint in the problem \eqref{opt:obj_norm} and the objective of \eqref{opt:obj_loss}. 

In \citet{Madry2017TowardsDL} the authors argue that projected gradient descent (PGD) with multiple steps provides a good solution to the problem. They showed that, when PGD is run with different random restarts, the optimized objective values seem to concentrate. They then concluded that PGD does not get stuck in a bad local minimum. Therefore, they advised using PGD with multiple steps to craft adversarial examples and to evaluate the robustness of classifiers. The authors of \citet{Carlini2017TowardsET} attempt to solve the problem \eqref{opt:obj_norm} by a Lagrangian relaxation. They do not specify a robustness level $\epsilon$ constraint to the attack. However, they minimize the distance of the crafted adversarial example to the input while adding a margin like loss to derive the constraint of \eqref{opt:obj_norm} to be satisfied. A combination of the two approaches was used by the authors of \citet{Athalye2018ObfuscatedGG}. In it the authors used the PGD algorithm to solve a relaxed objective of \eqref{opt:obj_norm}. 

A more realistic setting for practical attacks is the black-box setting \cite{papernot2017practical,brendel2017decision}, in which the attacker can access the attacked model only through queries. In \citet{papernot2017practical} the authors proposed to query the attacked model with random samples and use the returned label to train a local replica model. They then craft adversarial examples on the replica in a white-box fashion. Those examples are then shown to transfer to the attacked model, that is, they are also misclassified under the black-box model. The authors of \citet{brendel2017decision} proposed a heuristic search for adversarial examples. Given an input example, they generate a random example with a different class. Starting from it, they sample a random direction to move according to a \textit{proposal} distribution. This distribution is designed such that moving in its sampled directions, the distance between the original and current point is reduced. After each move, the new point is checked if it still has a different class. If yes, the point is accepted, otherwise, it is rejected and a new sample direction is sampled. After a large number of iterations, they show that they could craft adversarial samples that could fool industrial deployed systems.   

As more attacks are devised, many defense mechanisms have been proposed. Defenses can be characterised into data augmentation \cite{Madry2017TowardsDL,kannan2018adversarial}, regularization based \cite{bietti2018kernel,Ciss2017ParsevalNI} and detection \cite{magnet2017,grosse2017statistical}.

Perhaps the most widely used defense mechanism is a data augmentation method known as \emph{adversarial training} \cite{Madry2017TowardsDL}. In adversarial training one generates adversarial examples using an attack and includes these examples in the training set. It was proposed concurrently with the introduction of adversarial examples in \citet{Szegedy2013IntriguingPO}. The proposed attack method in \citet{Szegedy2013IntriguingPO} was, however, slow and hence not suitable as an inner loop in the training procedures. A fast attack was proposed in \citet{Goodfellow2014ExplainingAH} to enhance adversarial training. It was, however, later shown that such a fast method is inadequate to promote real robustness \cite{tramer2017ensemble} as authors introduce the masking gradient phenomena. Masking gradient or more generally obfuscating gradients \cite{Athalye2018ObfuscatedGG} is a term coined to describe the situation where a defense causes the input gradient of a model to be non-informative. This prevents weak attacks from altering example label but stronger attacks like PGD would still be successful \cite{Athalye2018ObfuscatedGG}. When a strong attack used in adversarial training, real robustness can be achieved \cite{Athalye2018ObfuscatedGG,Madry2017TowardsDL}. A theoretical motivation to adversarial training was introduced in \citet{Madry2017TowardsDL} where the authors viewed it as solving a robust optimization problem
\begin{equation*}
    \min_\theta  \mathbb{E}[\max_{\|\delta\|_p \le \epsilon} \ell({\bf f}(x + \delta),y)].
\end{equation*}
They then utilize Danskin's theorem \cite{danskin2012theory} to argue that one can solve the inner maximization with PGD and substitute solution $\delta^*$ to solve the outer minimization. This argument, however, requires the inner maximization to be solved which can not be guaranteed due to the non-concavity of the problem. Adversarial training with a PGD has withstood a lot of tests \cite{Athalye2018ObfuscatedGG} and is therefore considered the state of the art. Different flavors of adversarial training have also been proposed. For example in \citet{kannan2018adversarial}, the authors proposed to use the distance between the \textit{logits} of clean and adversarial examples as a regularization term. Similarly, \citet{zhang2019theoretically} proposed to use a \textit{cross-entropy} loss between the predicted class probability of the model on a clean example and on adversarial examples generated based on that clean example. 

Another class of defenses is that of regularization based approaches. In this approach, one identifies a property of the model and relates it to adversarial vulnerability and designs a regularization term to reduce such a property. For example, in \citet{bietti2018kernel,Ciss2017ParsevalNI} the authors proposed to regularize the Lipschitz constant of the prediction model by reducing the operator norms of the weight matrices. The norm of the input gradient has been proposed to use as a regularizer to defend against adversarial examples \cite{Hein2017FormalGO,ross2018improving,finlay2019scaleable}. Interestingly in \citet{Hein2017FormalGO}, the authors derived a principled approach to defense. They theoretically derived a relationship between the norm of the adversarial noise and the maximum norm of the input gradient locally around an input example. They then proposed to regularize the norm of the gradient at the input point to promote robustness. Since the regularizer is designed to derive the norm of the gradient to be small at the input point and not the ball around it, it might be less powerful as there is an implicit assumption that the model is locally linear around the input. In our work, we propose a second order approach which provides a better local approximation of a model. We follow the principled approach of \citet{Hein2017FormalGO} by first deriving a relationship between adversarial robustness and the maximum operator norm of the Hessian matrix in a ball around an input example. Motivated by this result, we propose a regularizer to derive the operator norm of the Hessian to be small and thus promote robustness. 

Related to our proposed method is the work of \citet{moosavi2019robustness} and \citet{singla2020curvaturebased}. In \citet{moosavi2019robustness}, the authors experimentally noticed that adversarial training reduces the curvature of the loss function of neural networks, where they define curvature as the maximum magnitude of Hessian eigenvalues. They further proposed to regularize the curvature to induce robustness to the trained models. They used two main approximations in their regularizer. First, they used a finite difference method to compute a Hessian vector multiplication and therefore their proposed regularizer is simply a difference of gradients regularization. In our work, we used an exact and efficient Hessian vector multiplication algorithm \cite{Pearlmutter1994}. Second, the authors estimated the Hessian operator norm by the norm of the Hessian multiplied by the gradient assuming that the gradient and the vector associated with the largest eigenvalue of the Hessian are almost parallel. We experimentally show that it is not a tight lower bound on the operator norm of the Hessian. Therefore, we use an optimization approach to estimate the true operator norm of the Hessian. On the other hand \citet{singla2020curvaturebased} proposes a certificate, a computable lower bound on the norm of adversarial noise, on adversarial robustness based on the curvature. They provided an upper bound on the operator norm of the input Hessian of the model and proposed to use it as a regularizer. The upper bound however is independent of the input example and therefore not tight for many inputs as we show in the experiments section that the Hessian operator norm distribution has a large tail on real inputs let alone pathological ones.
\section{Proof of Theorem 2}
\label{sec:proofs}

In this section, we give the proof of theorem \ref{thm:tight}. We restate the theorem below and give its proof. 
\begin{theorem}
 
    Let $x \in \R^d$ and ${\bf f}: \R^d \rightarrow \R^2$ be a binary-classifier scoring function with continuously twice differentiable components. Further let ${\bf f}_0(x) > {\bf f}_1(x)$ and define $f = {\bf f}_0 - {\bf f}_1$. $H_f:\R^d\rightarrow \R^{d\times d}$  denotes the Hessian of $f$. For $\delta \in \mathbb{R}^d$ and $p,q$ such that $\frac{1}{p} + \frac{1}{q} = 1$, if ${\bf f}_1(x+\delta) > {\bf f}_0(x+\delta)$, then the following bound holds:
    \begin{align*}
        \|\delta\|_p \ge \max_{R>0}\min\left \{R,\frac{f(x)}{\|\nabla f(x)\|_q+ \frac{R}{2}K_R(f,x)}\right \},
    \end{align*}
     where $K_R(f,x) = \max_{\|\gamma\|_p\le R}K(f,x+\gamma)$, for $K(f,x):= \opn{H_f(x)}$.
\end{theorem}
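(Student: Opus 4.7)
The plan is to mimic the strategy behind Theorem~\ref{thm:hein} but use a second-order Taylor expansion instead of a first-order one, so that the local curvature enters the bound explicitly.

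\textbf{Step 1: Taylor expansion.} Since the components of ${\bf f}$ are twice continuously differentiable, Taylor's theorem with the Lagrange form of the remainder gives some $t \in [0,1]$, hence some $\xi = x + t\delta$ on the segment between $x$ and $x+\delta$, such that
\begin{equation*}
f(x+\delta) \;=\; f(x) \;+\; \langle \nabla f(x),\delta\rangle \;+\; \tfrac{1}{2}\,\delta^{\top} H_f(\xi)\,\delta .
\end{equation*}
The hypothesis ${\bf f}_1(x+\delta) \ge {\bf f}_0(x+\delta)$ forces $f(x+\delta) \le 0$, while $f(x) > 0$. Rearranging and taking absolute values therefore gives
\begin{equation*}
f(x) \;\le\; |\langle \nabla f(x),\delta\rangle| \;+\; \tfrac{1}{2}\,|\delta^{\top} H_f(\xi)\,\delta| .
\end{equation*}

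\textbf{Step 2: H\"older bounds on the two terms.} The linear term is handled by the standard H\"older inequality $|\langle\nabla f(x),\delta\rangle| \le \|\nabla f(x)\|_q\|\delta\|_p$. For the quadratic term, applying H\"older to the inner product $\delta^{\top}(H_f(\xi)\delta)$ and then the definition of the operator norm yields
\begin{equation*}
|\delta^{\top} H_f(\xi)\,\delta| \;\le\; \|\delta\|_p\,\|H_f(\xi)\delta\|_q \;\le\; \opn{H_f(\xi)}\,\|\delta\|_p^{2}.
\end{equation*}

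\textbf{Step 3: Case analysis on $R$.} Fix $R>0$. If $\|\delta\|_p \ge R$, the bound $\|\delta\|_p \ge \min\{R,\ldots\}$ is immediate. Otherwise $\|\delta\|_p < R$, in which case $\xi = x+t\delta$ satisfies $\|\xi - x\|_p \le \|\delta\|_p < R$, so by definition $\opn{H_f(\xi)} \le K_R(f,x)$. Plugging the two H\"older bounds back in gives
\begin{equation*}
f(x) \;\le\; \|\nabla f(x)\|_q\|\delta\|_p + \tfrac{1}{2}K_R(f,x)\,\|\delta\|_p^{2} \;\le\; \|\delta\|_p\!\left(\|\nabla f(x)\|_q + \tfrac{R}{2}K_R(f,x)\right),
\end{equation*}
where the final step uses $\|\delta\|_p < R$ to upgrade the quadratic factor to a linear one. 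Rearranging yields $\|\delta\|_p \ge f(x)/\bigl(\|\nabla f(x)\|_q + \tfrac{R}{2}K_R(f,x)\bigr)$, so in both cases $\|\delta\|_p \ge \min\{R,\,f(x)/(\|\nabla f(x)\|_q + \tfrac{R}{2}K_R(f,x))\}$. Since $R>0$ was arbitrary, we take the supremum over $R$ and obtain the claimed bound.

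\textbf{Expected obstacle.} The only subtle point is the ``linearization'' trick at the end of Step~3: the raw consequence of Taylor's theorem is a \emph{quadratic} inequality in $\|\delta\|_p$, and one could try to solve it directly via the quadratic formula. The clean linear form stated in the theorem is obtained only by restricting attention to the regime $\|\delta\|_p<R$ and using that restriction to replace one factor of $\|\delta\|_p$ by $R$. This is exactly what makes the case split on $R$ (and the subsequent $\max_{R>0}$ outside) both necessary and natural, and explains why the bound takes the form of a maximum of a minimum rather than a single closed-form expression.
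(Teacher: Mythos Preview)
Your proof is correct and follows essentially the same route as the paper: second-order Taylor expansion, H\"older and operator-norm bounds on the linear and quadratic terms, then the case split on $\|\delta\|_p$ versus $R$ followed by maximizing over $R$. The only cosmetic difference is that the paper uses the integral form of the Taylor remainder (and then evaluates $\int_0^1(1-t)\,dt = \tfrac12$) whereas you use the Lagrange form directly, which is slightly cleaner here and perfectly valid since $f$ is scalar-valued.
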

\begin{proof}
  By Taylor theorem with remainder \cite{Tu2011} we have,
  \begin{equation*}
    f(x+\delta) = f(x) + \left<\nabla f(x), \delta\right> + \int_0^1 \delta^T H_{f}(x+t\delta)\delta(1-t) dt,
  \end{equation*}
  Given that ${\bf f}_1(x+\delta) \ge {\bf f}_0(x+\delta)$, we get $f(x+\delta) \le 0$ and hence, 
\begin{equation*}
    f(x) \le  \left<-\nabla f(x), \delta\right> + \displaystyle\int_0^1 -\delta^T H_{f}(x+t\delta)\delta (1-t) dt. 
\end{equation*}
By applying H\"older inequality, monotonicity of integration and the definition of the operator norm, we get,
\begin{equation}
\label{eq:bfor_max}
\begin{split}
    f(x) 
    \le&  \left<-\nabla f(x), \delta\right> + \int_0^1 -\delta^T H_{f}(x+t\delta)\delta (1-t) dt \notag\\
    \le&  \|\nabla -f(x) \|_q \|\delta\|_p \\&+ \int_0^1\|\delta\|_p \left\|H_{f}(x+t\delta)\delta\right\|_q (1-t) dt  \notag\\
    \le&  \|\nabla -f(x) \|_q \|\delta\|_p \\&+ \int_0^1\|\delta\|_p^2 \opn{H_{f}(x+t\delta)}(1-t) dt  \notag\\
    =& \|\nabla f(x) )\|_q \|\delta\|_p \\&+ \|\delta\|_p\int_0^1 \opn{H_{f}(x+t\delta) }\|\delta\|_p(1-t) dt. 
    \end{split}
\end{equation}
For a fixed $R>0$, assume that $\|\delta\|_p \le R$, then by monotonicity of integration we have, 
\begin{align*}
    &\int_0^1 \opn{H_{f}(x+t\delta) }\|\delta\|_p(1-t) dt\\ 
    &\le \displaystyle\int_0^1 \max_{\|\gamma\|_p \le R}\opn{H_{f}(x+\gamma) }\|\delta\|_p(1-t) dt\\
    &= K_R(f,x) R \displaystyle\int_0^1 (1-t) dt\\
    &=\frac{R}{2} K_R(f,x).
\end{align*}
Substituting in \eqref{eq:bfor_max} we get,
\begin{align*}
    f(x) 
    &\le  \|\nabla f(x) \|_q \|\delta\|_p +  \|\delta\|_p\frac{R}{2}K_R(f,x)\\
    &\le \|\delta\|_p \left(\|\nabla f(x) )\|_q  +  \frac{R}{2}K_R(f,x)\right).
\end{align*}
  Thus we have that
  \begin{equation*}
        \|\delta\|_p \ge \frac{f(x)}{\|\nabla f(x) \|_q  +  \frac{R}{2}K_R(f,x)}.
      \end{equation*}
          Since we restricted $\|\delta\|_p$ to be at most $R$, the last bound is only correct when the left-hand side is less than $R$. So we restrict the lower bound to be at most $R$. That is,
  \begin{equation*}
        \|\delta\|_p \ge \min\left\{ R,\frac{f(x)}{\|\nabla f(x) \|_q  +  \frac{R}{2}K_R(f,x)}\right\}.
      \end{equation*}
  Maximizing of $R$ we get,
  \begin{equation*}
        \|\delta\|_p \ge \max_{R > 0}\min\left\{ R,\frac{f(x)}{\|\nabla f(x) )\|_q  +  \frac{R}{2} K_R(f,x)} \right\}.
      \end{equation*}
\end{proof}



\section{Experiments}
\label{sec:det_exp}
In this section, we give a detailed experimental setup and results. We begin by investigating the estimation of the Hessian operator norm and compare this to the norm of the Hessian multiplied by the gradient to the Hessian operator norm. We then present experiments where we evaluate adversarial robustness on both the MNIST and FMNIST datasets using our adversarial defense versus gradient based methods and adversarial training. 

\subsection{Hessian Operator Norm Estimation} \label{sec:opn_exp}
In this experiment we investigate the Hessian operator norm approximation presented in \citet{moosavi2019robustness}. The authors propose to estimate the Hessian\footnote{The Hessian is computed on $f^{(j)}(x)$, where $j$ is the index of the second largest logit.} operator norm with the norm of the Hessian multiplied by the gradient, we denote this quantity by $|Hg|$. We compare this to the true Hessian operator norm computed by gradient ascent optimization that we denote by $|Hv^*|$. We train a convolution neural network with the architecture below on MNIST dataset with no defense. Then for each data point on the evaluation set we compute both $|Hg|$ and $|Hv^*|$ and plot the histogram (see figure \ref{fig:exp_op_est}). Figure \ref{fig:opn_his} shows that the histogram of $|Hv^*|$ is shifted to the right and has a heavier tail. There is however a large overlap between the two histograms. We then investigate whether this overlap correspond to $|Hg|$ being a good estimate or not. Figure \ref{fig:opn_diff} depicts the histogram of the difference $|Hv^*| - |Hg| $ where we notice that it has a mode at around 5 with a heavier right tail. Therefore, we conclude that $|Hg|$ is not a tight lower bound and hence it is better to use the true operator norm for regularizing the Hessian. On the other hand, we see from the heavy tail of the histogram of $|Hv^*|$ that any global bound on the Hessian operator norm is not tight for most of the input examples and therefore using it as regularizing term as in \cite{singla2020curvaturebased} is restrictive, especially when the true operator norm can be efficiently used as in our approach.

\subsection{Adversarial Robustness Evaluation}
In this section, we evaluate the adversarial robustness of models trained with the Cross-H\"older regularizer. We test the robustness on two datasets, namely MNIST and FMNIST.
\subsubsection{Setup}
 As baseline defenses, we use the gradient based approach of Cross-Lipschitz \cite{Hein2017FormalGO}. We also compare against adversarial training \cite{Madry2017TowardsDL} as it is considered state of the art defense \cite{Athalye2018ObfuscatedGG}. We used the same network architecture for both the MNIST and FMNIST datasets which is the same architecture used in \citet{Carlini2017}. It consists of 4 convolution layers with $3\times3$-sized filters and the respective filter maps sizes of (32,32,64,64), followed by two fully connected layers of size 200 connected to the logits layer with size 10.
 We introduced two changes to the architecture to ensure that it is twice differentiable. First, we replaced the ReLU activation function with SWISH $x\sigma(x)$ \cite{Ramachandran2017SearchingFA}, where $\sigma$ is the sigmoid function. Second, max-pooling was replaced by strided convolution every second convolutional layer. 
\\\\
\noindent \textbf{Robustness Evaluation}
Evaluating the robustness is a hard problem due to the high non-convexity of the attack optimization problem \cite{Madry2017TowardsDL}. Many defenses claim robustness against weak attacks while they fail to be robust against harder attacks \cite{Athalye2018ObfuscatedGG,carlini2019evaluating}. Therefore it is important to use the strongest available attacks to evaluate the robustness of models trained with a given defense mechanism. We use the recommended PGD attack with multiple iterations and multiple random initializations \cite{carlini2019evaluating}. Since in adversarial training an attack is used in the training procedure it is recommended to use a different attack in the evaluation phase \cite{carlini2019evaluating}. Therefore we use two iterative PGD attacks with 50 iterations and 10 random restarts. The first is using cross entropy loss as objective, the same as adversarial training. The second is using the Carlini and Wagner (CW) objective \cite{carlini2019evaluating,Athalye2018ObfuscatedGG}. We also report the worst case results of the two attacks, which is the minimum adversarial accuracy of the two attacks. The robustness metric we report is adversarial accuracy, the accuracy after we have applied a given attack. We report adversarial accuracy for different robustness levels. 
\\\\
\noindent \textbf{Hyperparameters} 
We implement three defenses that depend on several hyperparameters. Adversarial training has the parameter $\varepsilon$, the robustness level used for the training attack.  Cross-H\"older has the parameters $\lambda_1$ and $\lambda_2$. We choose the hyperparameters by leaving out 5\% of the data outside of the training set. We employ a grid search to select the hyperparameter on the left out evaluation set. The criteria to select the winning hyperparameter is for it to achieve at least 99\% accuracy on the clean evaluation set in case of MNIST and 88\% in the case of FMNIST. We then select the hyperparameter by selecting the best sum of adversarial accuracy on robustness levels 3.0 and 4.0. The selected parameters for adversarial training were $\epsilon = 5.0$ on MNIST and $\epsilon = 4.0$ on FMNIST. For Cross-Lipschitz, we selected $\lambda_1 = 0.2$ for MNIST and $\lambda_1= 0.15$ for FMNIST. Finally for Cross-H\"older, we select $\lambda_1=0.02$, $\lambda_2=0.5$ for MNIST and $\lambda_1=0.2$, $\lambda_2=0.5$ for FMNIST.
\\\\
\noindent \textbf{Results} 
Figure \ref{fig:mnist_res} shows the result of adversarial evaluation on the MNIST dataset. Figure \ref{fig:mnist_pgd_xent} shows the result of applying the PGD attack with the cross entropy loss. The figure shows that all three defenses are quite effective up to robustness level 3.0 after which we see that both Cross-Lipschitz and Cross-H\"older accuracies decrease with Cross-Lipschitz decreases faster. This was expected as in adversarial training the same attack was used in training. Figure \ref{fig:mnist_pgd_cw} shows the results of applying PGD with CW loss. On this attack Cross-H\"older behaves better than both adversarial training and Cross-Lipschitz after robustness level 4.0. Figure \ref{fig:mnist_worst} shows the result of the worst case attack which again shows that Cross-H\"older outperforms both Cross-Lipschitz and adversarial training after robustness level 4.0. On MNIST, our experiments suggest that Cross-H\"older regularizer outperforms both Cross-Lipschitz and adversarial training. 
\begin{figure}
    \centering
    \subfloat[ \label{fig:opn_his}]{{\includegraphics[width=0.4\textwidth]{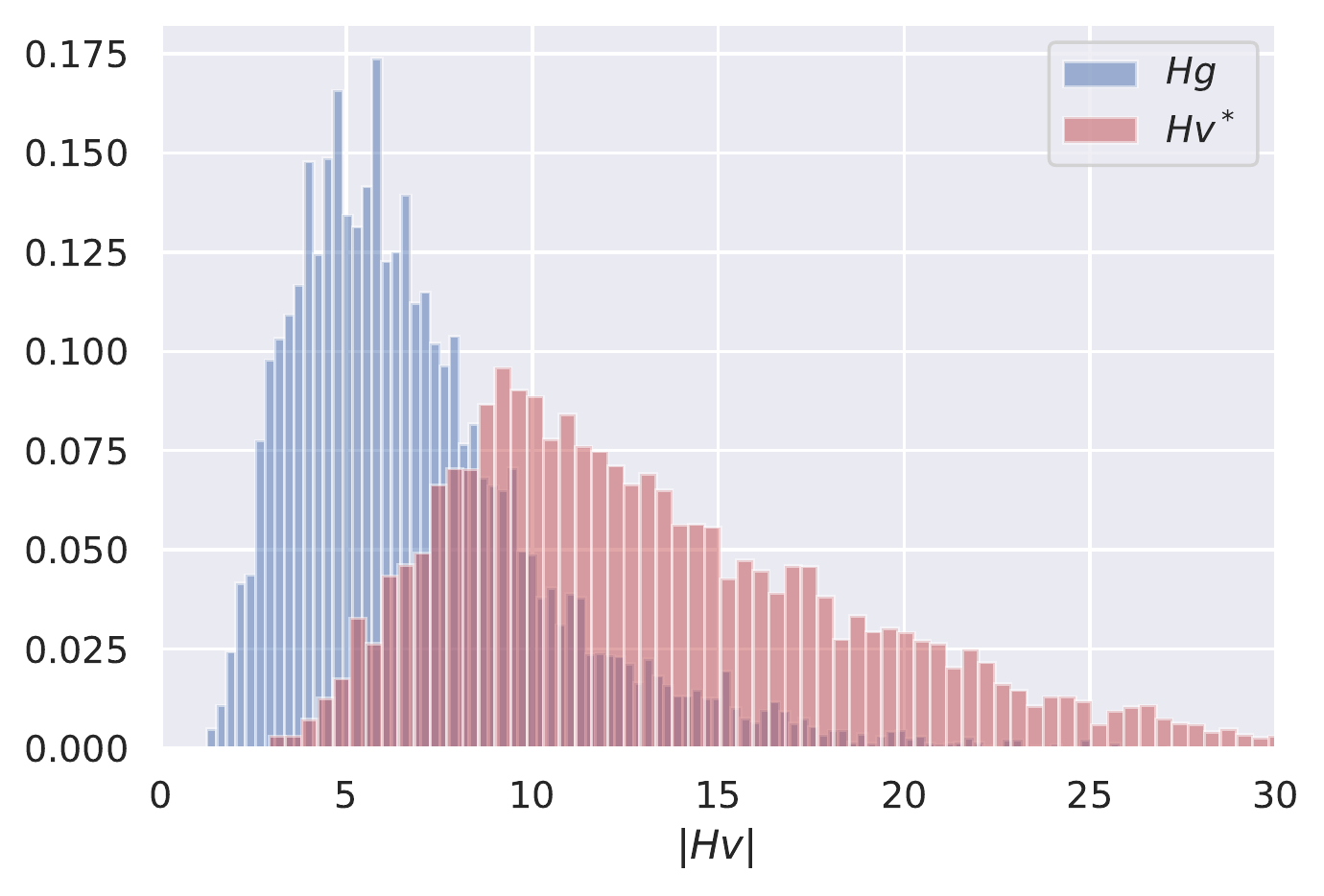}}}
    \qquad  
    \subfloat[\label{fig:opn_diff}]{{\includegraphics[width=0.4\textwidth]{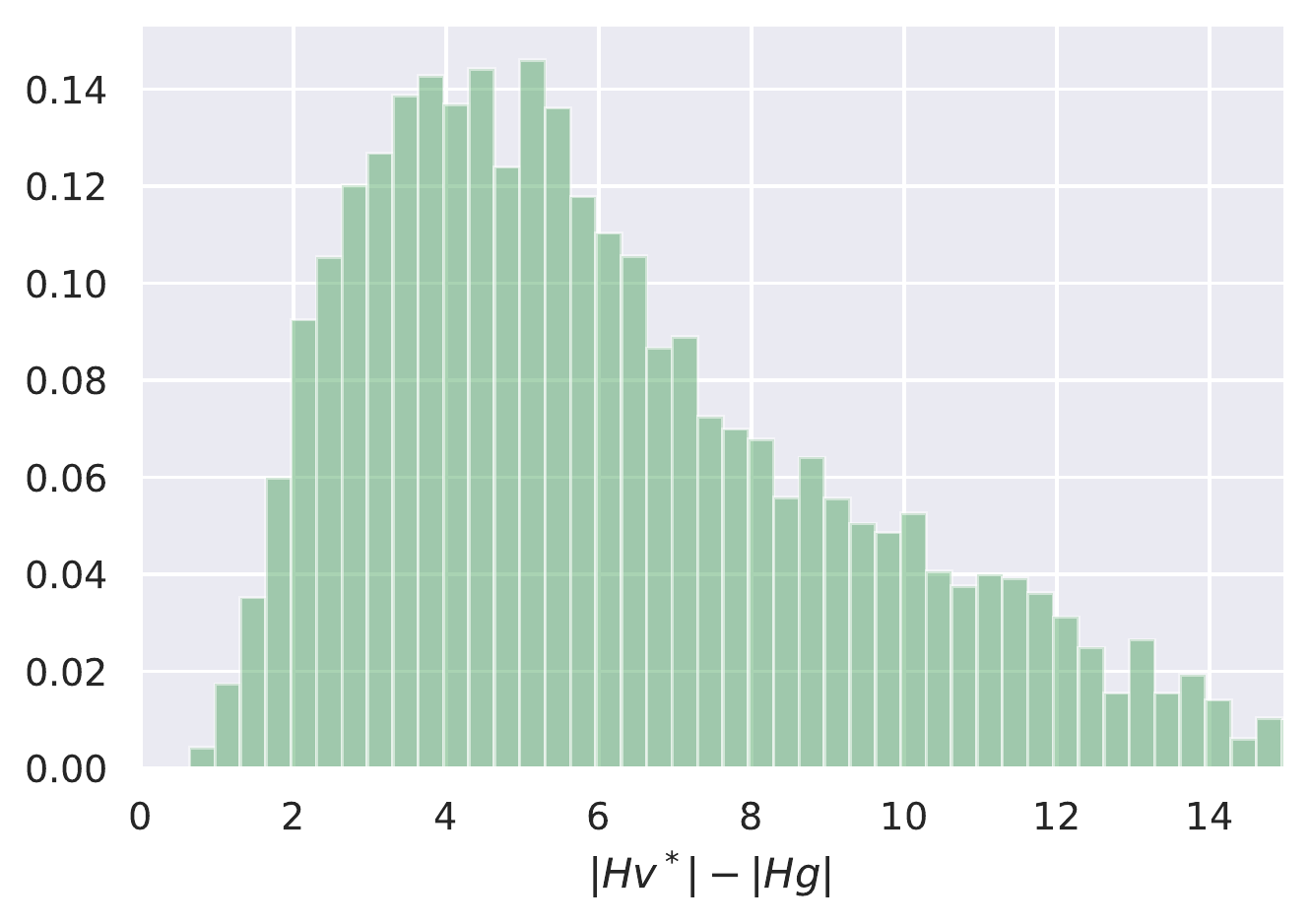}}}
    \caption{Hessian operator norm estimation comparison.}
    \label{fig:exp_op_est}
\end{figure}
\clearpage
\begin{figure}
 \centering
    \subfloat[PGD on cross entropy loss.\label{fig:mnist_pgd_xent}]{{\includegraphics[width=0.4\textwidth]{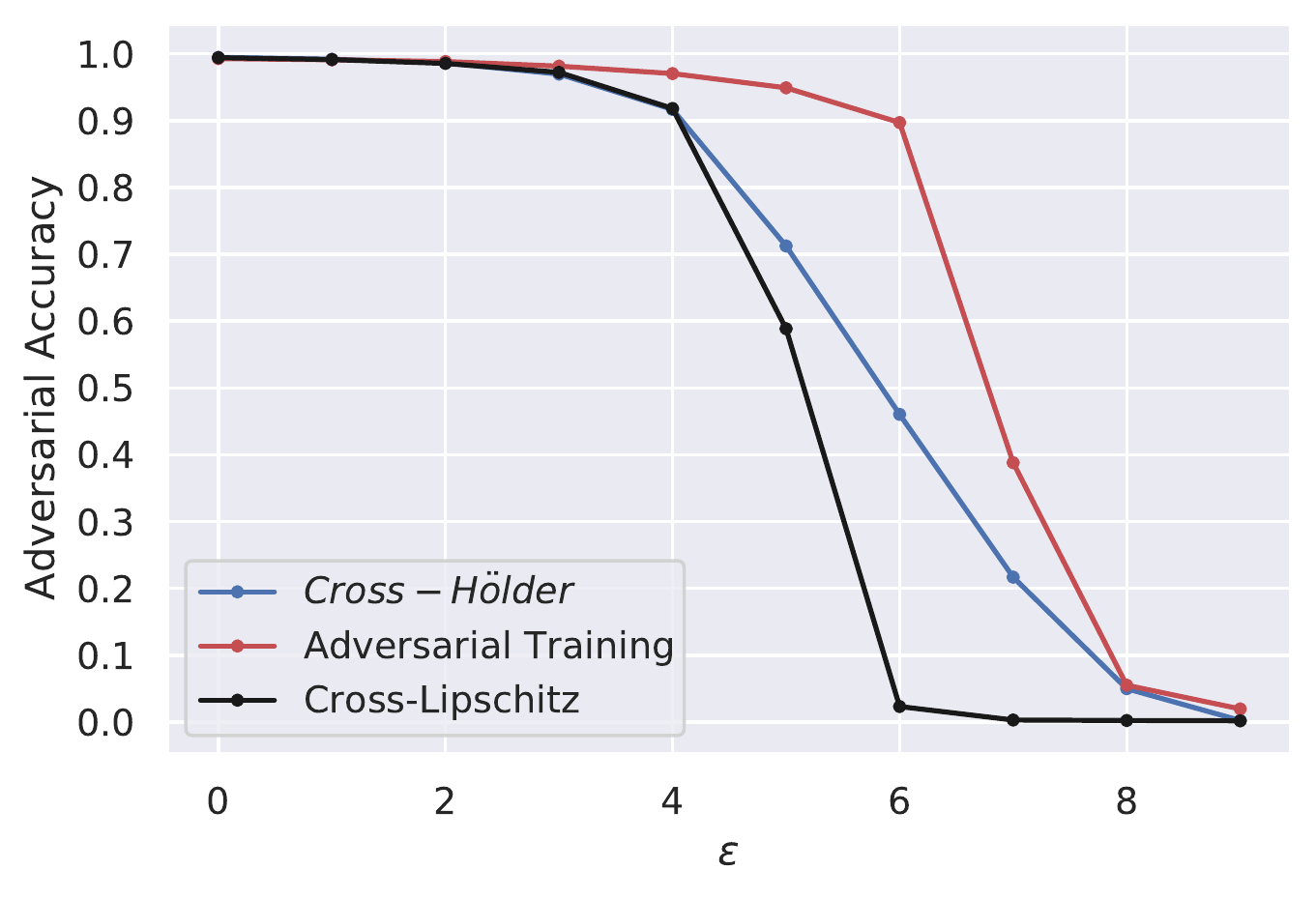}}}
    \qquad  
    \subfloat[PGD on CW loss. \label{fig:mnist_pgd_cw}]{{\includegraphics[width=0.4\textwidth]{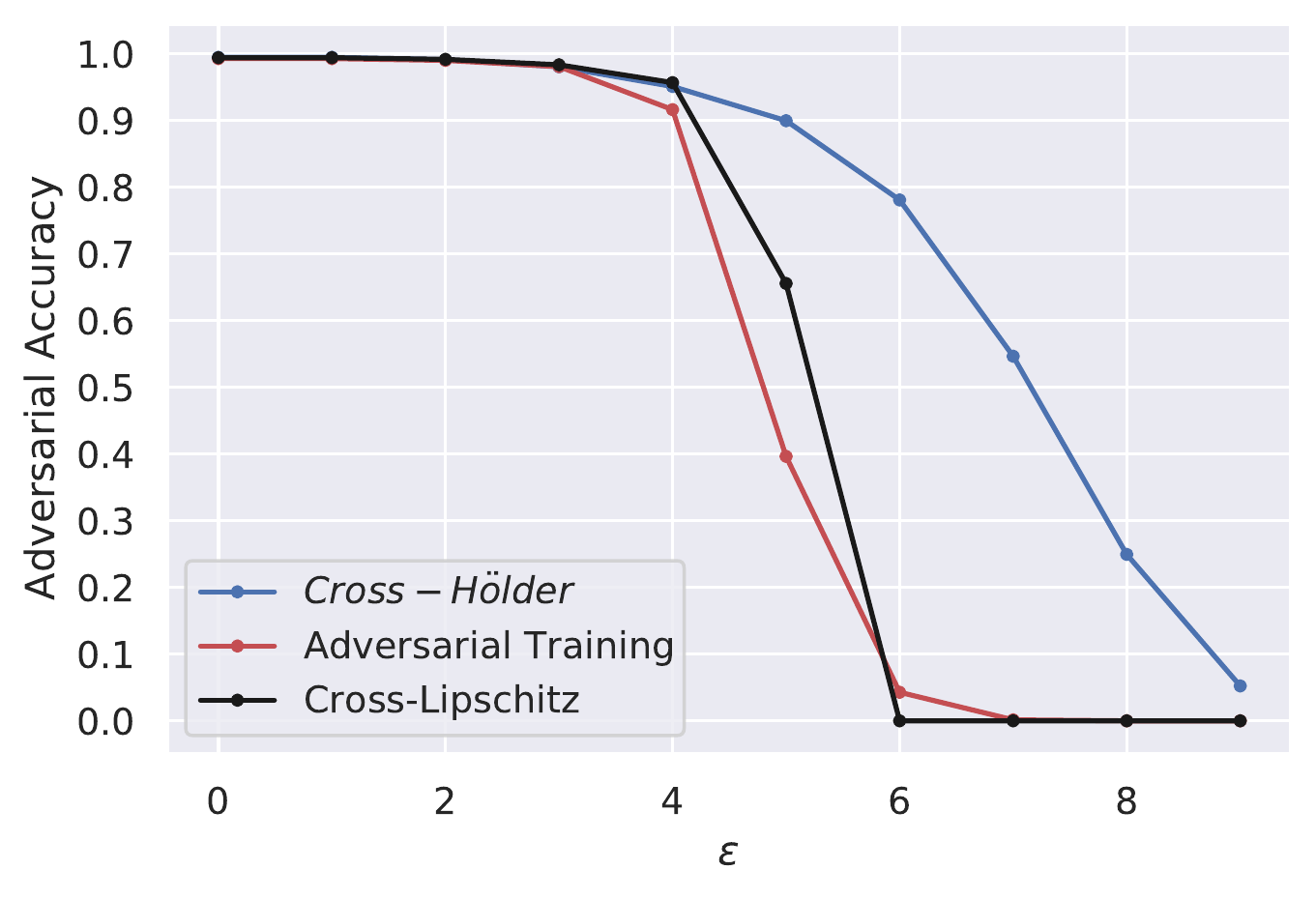}}}
    \qquad
    \subfloat[Worst case attack.\label{fig:mnist_worsta}]{{\includegraphics[width=0.4\textwidth]{pgd_mini.pdf}}}
\caption{Adversarial accuracy versus robustness levels for MNIST dataset.}
\label{fig:mnist_resa}
\end{figure}
\begin{figure}
    \centering
    \subfloat[PGD on cross entropy loss.\label{fig:fmnist_pgd_xent}]{{\includegraphics[width=0.4\textwidth]{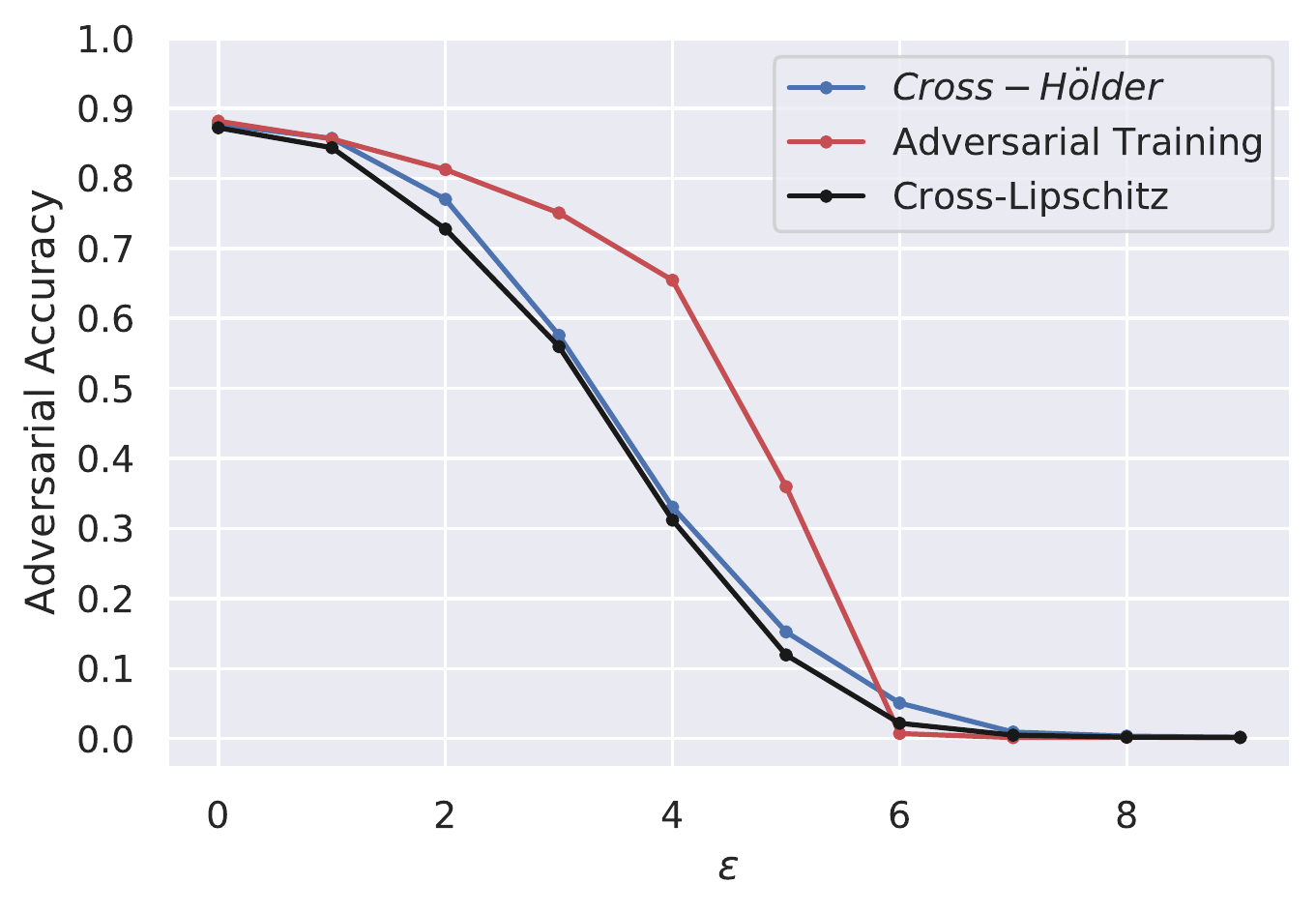}}}
    \qquad  
    \subfloat[PGD on CW loss.  \label{fig:fmnist_pgd_cw}]{{\includegraphics[width=0.4\textwidth]{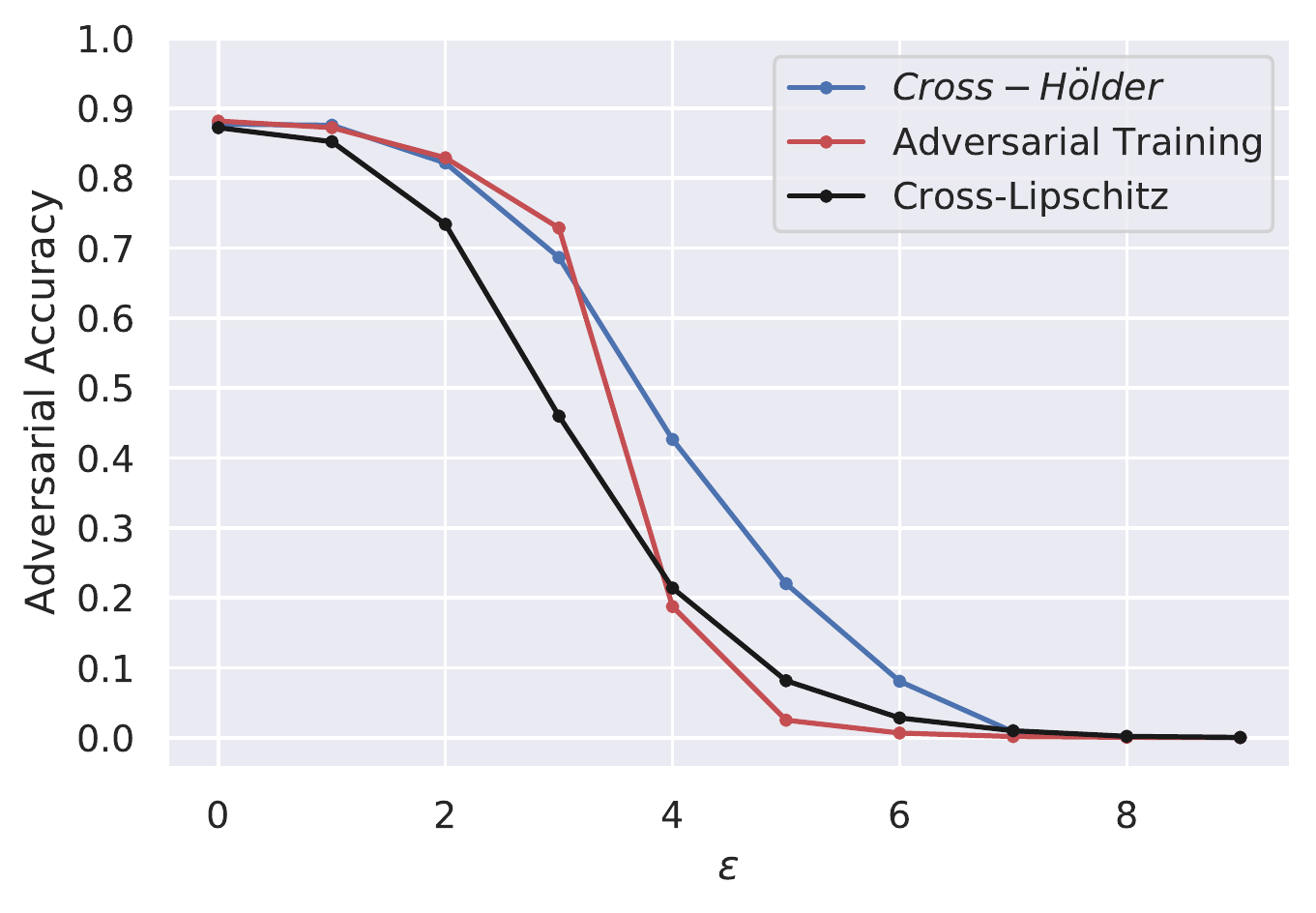}}}
    \qquad
    \subfloat[Worst case attack.\label{fig:fmnist_minia}]{{\includegraphics[width=0.4\textwidth]{fmnist_pgd_mini.pdf}}}
    \caption{Adversarial accuracy versus robustness levels fashion MNIST dataset.}
    \label{fig:fmnist_resa}
\end{figure}

\end{document}